\documentclass[pdflatex,sn-basic,Numbered]{sn-jnl}

\usepackage{graphicx}
\usepackage{multirow}
\usepackage{amsmath,amssymb,amsfonts}
\usepackage{amsthm}
\usepackage{mathrsfs}
\usepackage{booktabs}
\usepackage{tikz}
\usetikzlibrary{positioning, arrows.meta, shapes.geometric, calc, decorations.pathreplacing}
\usepackage{cleveref}
\crefname{figure}{Fig.}{Figs.}
\Crefname{figure}{Fig.}{Figs.}
\crefname{table}{Table}{Tables}
\Crefname{table}{Table}{Tables}

\theoremstyle{thmstyleone}
\newtheorem{theorem}{Theorem}

\newcommand{\method}{2D-RoPE-STR}
\newcommand{\R}{\mathbb{R}}

\begin{document}

\title[2D Rotary Position Embedding for STR]{2D Rotary Position Embedding for Scene Text Recognition with Transformers}

\author*[1]{\fnm{Zobeir} \sur{Raisi}}\email{zobeir.raisi@cmu.ac.ir}

\affil*[1]{\orgdiv{Electrical Engineering Department}, \orgname{Chabahar Maritime University}, \orgaddress{\city{Chabahar}, \country{Iran}}}

\abstract{Scene Text Recognition (STR) remains challenging due to the diversity of text appearances, including curvature, rotation, and perspective distortion. Recent Transformer-based approaches perform well but usually rely on one-dimensional positional encodings that ignore the 2D spatial structure of text images. Axial 2D extensions of Rotary Position Embedding (RoPE) have been proposed for vision Transformers, but they assume roughly square, isotropic image content and apply the rotation only within encoder self-attention. Scene text violates both assumptions: crops are markedly anisotropic (wider than tall), and STR models are encoder-decoder, so a decoder must relate its queries to the encoder's 2D layout through cross-attention rather than self-attention alone. We introduce \method{}, which adapts axial 2D-RoPE to this setting through (1) an anisotropic row/column dimension allocation matched to the aspect ratio of text, and (2) an extension of the rotary coupling into encoder-decoder cross-attention, letting autoregressive decoding steps attend to encoder tokens by their 2D layout---a setting not addressed by prior, encoder-only formulations. Both changes are essentially parameter-free and require no architectural redesign beyond the positional-encoding module. Beyond reporting accuracy, we introduce a diagnostic protocol---a controlled ablation pair isolating only the positional encoding, an image-level net-win disagreement analysis, and encoder attention visualization---that identifies where and why relative 2D position helps: curved, rotated, and perspective-distorted layouts where reading order departs from a straight horizontal line. We evaluate \method{} on six standard benchmarks---IIIT5K, SVT, ICDAR 2013, ICDAR 2015, CUTE80, and SVTP---showing gains concentrated on exactly these irregular layouts, with ablations isolating each design choice against 1D RoPE and 2D sinusoidal/learnable alternatives.}

\keywords{Scene Text Recognition, Transformer, Positional Encoding, Rotary Position Embedding, 2D-RoPE}

\maketitle

\section{Introduction}\label{sec:intro}

\begin{figure}[!t]
\centering
\includegraphics[width=0.9\textwidth]{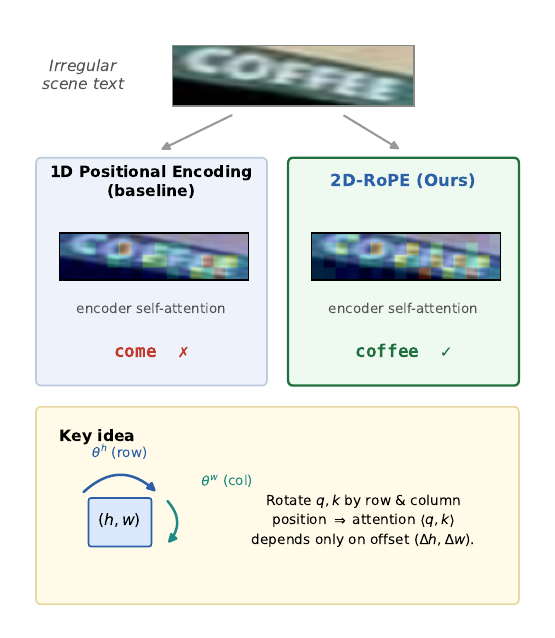}
\caption{Motivation, on a real perspective-distorted crop fed to the positional-encoding ablation pair (identical backbone, decoder, and training schedule; only the positional encoding differs). The 1D-sinusoidal baseline mis-reads the word (\texttt{come}), whereas our \textbf{2D-RoPE}---which rotates queries and keys by both row and column position so that self-attention depends only on the \emph{relative} 2D offset $(\Delta h,\Delta w)$---reads it correctly (\texttt{coffee}). Overlays show the encoder self-attention of each model. The encoding is plug-and-play, replacing the positional module of any Transformer-based STR model.}
\label{fig:teaser}
\end{figure}

Scene Text Recognition (STR) aims to transcribe text from natural images into digital characters. It is a core component in many real-world applications such as document digitization, autonomous driving, and augmented reality. Despite years of progress, STR remains challenging because of variations in font, colour, orientation, perspective distortion, and cluttered backgrounds~\cite{shi2016robust,lee2016recursive}; we refer the reader to a recent survey~\cite{kadha2026pixels} for a broad overview of the field.

Deep learning has driven STR forward, with architectures evolving from CNN-RNN hybrid models~\cite{shi2016robust,shi2018aster} to attention-based encoder-decoder frameworks~\cite{cheng2017focusing,bhunia2021joint} and, more recently, Transformer-based approaches~\cite{atienza2021vision,fang2021read,yang2022reading}; see~\cite{afkari2025transformers} for a recent survey of Transformers in text recognition. Transformers~\cite{vaswani2017attention} have become the dominant architecture due to their ability to model long-range dependencies via self-attention.

However, a critical yet underexplored component in Transformer-based STR is \emph{positional encoding}. Since the self-attention operation is permutation-invariant, positional information must be injected explicitly. Most existing STR methods adopt one of three strategies: \textbf{1D sinusoidal encoding}~\cite{vaswani2017attention}, which encodes position along a single sequence axis and ignores the 2D spatial layout of text images; \textbf{1D learnable encoding}~\cite{devlin2019bert}, which learns a position vector per token index but is likewise limited to a single dimension; and \textbf{2D sinusoidal or learnable encoding}~\cite{dosovitskiy2021image,raisi2020positional,raisi2021twolspe}, which extends to two dimensions by concatenating row and column embeddings but treats them as independent additive biases, failing to capture \emph{relative} spatial relationships.

All three share the same limitation: they encode \emph{absolute} positions as additive offsets, so the model cannot easily reason about the \emph{relative} geometric relationships between characters. These relationships matter most in STR, where characters can be curved, rotated, or perspectively warped. \Cref{fig:teaser} contrasts this failure mode with our 2D-RoPE, which reads the same irregular word correctly.

Recently, Rotary Position Embedding (RoPE)~\cite{su2024roformer} has emerged as a powerful alternative in natural language processing. RoPE encodes position by rotating query and key vectors, so that the dot product between a query at position $m$ and a key at position $n$ depends only on the relative distance $m-n$. This property gives RoPE length extrapolation and strong relative-position modelling. RoPE has become the de facto standard in large language models such as LLaMA~\cite{touvron2023llama} and PaLM~\cite{chowdhery2023palm}.

RoPE has since been extended to two dimensions for vision Transformers, typically via an \emph{axial} decomposition that rotates one half of the channels by the row index and the other half by the column index~\cite{heo2024rotary}, with recent multi-directional variants relaxing the axis-aligned restriction~\cite{liu2026spiral}. These formulations target general imagery: they assume a roughly isotropic, square grid and apply the rotation only within encoder self-attention. Text images break both assumptions. Crops are markedly anisotropic---wider than tall---and STR architectures are encoder-decoder, so an autoregressive decoder must relate its queries to the encoder's 2D layout through cross-attention, not self-attention alone.

We adapt axial 2D-RoPE to this setting along two axes of the problem. First, rather than splitting the rotary dimensions evenly between row and column as in prior vision formulations, we allocate them according to the aspect ratio of text, giving the row axis---which spans a single line of characters---a larger share of the encoding. Second, we extend the rotary coupling into the decoder's cross-attention, applying it to the encoder keys so that each decoding step attends to encoder tokens by their 2D spatial position, a setting the encoder-only formulations above do not address. Both changes require no architectural redesign beyond the positional-encoding module and add no learned parameters beyond two per-axis scale scalars.

Our main contributions are:
\begin{enumerate}
\item We adapt axial 2D Rotary Position Embedding to the anisotropic, encoder-decoder setting of scene text recognition, via a text-aspect-ratio-matched row/column dimension split and an extension of the rotary coupling into decoder cross-attention---both absent from prior vision-only 2D-RoPE formulations and both validated as the best-performing choice in our ablations.
\item We design a \emph{plug-and-play} module that can replace the positional encoding in any Transformer-based STR system without modifying the backbone architecture.
\item We conduct experiments on six standard STR benchmarks together with controlled ablations that isolate the effect of the positional encoding from the rest of the architecture.
\item We introduce a diagnostic protocol---a controlled ablation pair isolating the positional encoding, an image-level net-win disagreement analysis, and encoder attention visualization---that identifies where and why the 2D rotary formulation helps, showing gains concentrate on irregular and perspective-distorted text at essentially no parameter cost.
\end{enumerate}

\section{Related Work}\label{sec:related}

\subsection{Scene Text Recognition}

Early STR methods relied on connectionist temporal classification (CTC)~\cite{graves2006connectionist} with CNN encoders~\cite{shi2016robust}. Attention-based encoder-decoder models~\cite{lee2016recursive,cheng2017focusing} then introduced learned alignment between image features and output characters. More recent works address irregular text through rectification modules~\cite{shi2018aster,zhan2019esir}, multi-directional feature extraction~\cite{yang2019symmetry}, or language model integration~\cite{fang2021read,yang2022reading}.

Transformer-based STR methods have shown strong results. TRIG~\cite{atienza2021vision} uses a pure Transformer for both encoding and decoding. ABINet~\cite{fang2021read} combines a vision Transformer with an autonomous bidirectional language model. PARSeq~\cite{bautista2022scene} proposes permuted autoregressive sequences for flexible inference. More recent work keeps refining the attention itself: Tian et al.~\cite{tian2024dynamic} adapt the receptive field to the varying character scales found in complex scenes. Others move beyond the encoder-decoder attention stack altogether, with SVTRv2~\cite{du2025svtrv2} showing that a CTC recognizer can match encoder-decoder accuracy and Mamba-STR~\cite{ali2026mamba} using selective state-space modelling for efficient context. However, these methods typically use standard 1D positional encodings inherited from NLP, overlooking the 2D nature of text images.

\subsection{Positional Encoding in Transformers}

\textbf{Absolute Positional Encoding.} The original Transformer~\cite{vaswani2017attention} uses sinusoidal functions of different frequencies. Learnable positional embeddings~\cite{devlin2019bert} replace fixed sinusoids with trainable parameters. Both are additive and encode absolute position.

\textbf{Relative Positional Encoding.} Shaw et al.~\cite{shaw2018self} add learnable relative position biases to attention logits. T5~\cite{raffel2020exploring} uses simplified relative biases. ALiBi~\cite{press2022train} applies linear penalties to attention scores. These methods explicitly model relative distances but are still 1D.

\textbf{2D Positional Encoding.} For vision tasks, 2D sinusoidal~\cite{dosovitskiy2021image} or 2D learnable~\cite{carion2020end} encodings are common. They concatenate or sum separate row and column embeddings. Conditional positional encoding (CPE)~\cite{chu2021conditional} uses depth-wise convolutions for local context. In the STR domain specifically, the importance of two-dimensional position has been studied in our prior work: a 2D positional embedding-based transformer~\cite{raisi2020positional} and a 2D learnable sinusoidal positional encoding (2LSPE)~\cite{raisi2021twolspe} that adapts the sinusoidal frequencies to the training data, while a contextual position encoding scheme was later applied to Persian text recognition in the wild~\cite{raisi2024contextual}. These STR encodings are nonetheless \emph{additive} and \emph{absolute}: none of them encode \emph{relative} 2D positions through rotation, which is the gap addressed in this work.

\textbf{Rotary Position Embedding (RoPE).} Su et al.~\cite{su2024roformer} proposed RoPE, which applies rotation matrices to queries and keys based on their absolute positions. The key property is that $q_m^\top k_n$ depends only on $m - n$. The properties of RoPE have since been analysed in depth---for example, \cite{barbero2024round} show that the highest frequencies build positional attention patterns while the lowest carry semantic content, which motivates our study of the frequency bases (\cref{tab:ablation_design}). RoPE has also been extended to 2D for vision tasks: an axial formulation that rotates one half of the channels by the row index and the other half by the column index~\cite{heo2024rotary}, and, more recently, multi-directional variants that lift the axis-aligned restriction~\cite{liu2026spiral}. These formulations assume near-isotropic image content and operate only within encoder self-attention; neither property holds for scene text, which is anisotropic and processed by encoder-decoder architectures. Building on our earlier additive 2D positional encodings for STR~\cite{raisi2020positional,raisi2021twolspe,raisi2024contextual}, this work adapts the axial rotary formulation to this setting through a text-aspect-ratio-matched dimension split and an extension of the rotary coupling into encoder-decoder cross-attention, together with a diagnostic protocol that isolates and explains the resulting gains.

\section{Methodology}\label{sec:method}

\subsection{Preliminary: Rotary Position Embedding}

Given an input vector $\mathbf{x} \in \R^d$ at position $m$, RoPE applies a rotation:
\begin{equation}
\text{RoPE}(\mathbf{x}, m) = \mathbf{R}_{m, \theta} \mathbf{x}
\end{equation}
where $\mathbf{R}_{m,\theta}$ is a block-diagonal rotation matrix:
\begin{equation}
\setlength{\arraycolsep}{2pt}
\mathbf{R}_{m,\theta} =
\begin{pmatrix}
\cos m\theta_1 & -\sin m\theta_1 & & & \\
\sin m\theta_1 & \cos m\theta_1 & & & \\
& & \ddots & & \\
& & & \cos m\theta_{d/2} & -\sin m\theta_{d/2} \\
& & & \sin m\theta_{d/2} & \cos m\theta_{d/2}
\end{pmatrix}
\label{eq:rope-matrix}
\end{equation}
with frequency bases $\theta_i = 10000^{-2(i-1)/d}$ for $i = 1, \ldots, d/2$.

For self-attention with query $\mathbf{q}$ at position $m$ and key $\mathbf{k}$ at position $n$:
\begin{equation}
\langle \mathbf{R}_{m,\theta}\mathbf{q}, \mathbf{R}_{n,\theta}\mathbf{k} \rangle = \langle \mathbf{q}, \mathbf{R}_{n-m,\theta}\mathbf{k} \rangle
\end{equation}
This shows the attention score depends only on the \emph{relative distance} $n - m$.

In practice, RoPE is applied in the complex-valued form for efficiency. For each pair of dimensions $(x_{2i-1}, x_{2i})$, treated as a complex number $z_i = x_{2i-1} + jx_{2i}$:
\begin{equation}
z_i \cdot e^{jm\theta_i} = (x_{2i-1} + jx_{2i})(\cos m\theta_i + j\sin m\theta_i)
\end{equation}

\subsection{2D Rotary Position Embedding}

A text image feature map has two spatial axes: the \emph{row} axis $h \in \{0, \ldots, H-1\}$ and the \emph{column} axis $w \in \{0, \ldots, W-1\}$. A token at spatial position $(h, w)$ must encode position along both axes.

We partition the $d$-dimensional feature vector into two halves: the first $d/2$ dimensions encode the row position, and the remaining $d/2$ dimensions encode the column position:
\begin{equation}
\mathbf{x} = [\mathbf{x}^h;\, \mathbf{x}^w], \quad \mathbf{x}^h, \mathbf{x}^w \in \R^{d/2}
\end{equation}

We define 2D-RoPE as applying independent rotations along each spatial axis:
\begin{equation}
\text{2D-RoPE}(\mathbf{x}, h, w) = [\mathbf{R}_{h,\theta^h}\,\mathbf{x}^h;\; \mathbf{R}_{w,\theta^w}\,\mathbf{x}^w]
\label{eq:2drope}
\end{equation}
where $\mathbf{R}_{h,\theta^h}$ and $\mathbf{R}_{w,\theta^w}$ are rotation matrices of dimension $d/2$ with frequency bases $\theta^h_i = 10000^{-2(i-1)/d}$ and $\theta^w_i = 10000^{-2(i-1)/d}$ respectively.

\begin{theorem}[2D Relative Position Property]
For query $\mathbf{q} = [\mathbf{q}^h; \mathbf{q}^w]$ at position $(h_q, w_q)$ and key $\mathbf{k} = [\mathbf{k}^h; \mathbf{k}^w]$ at position $(h_k, w_k)$, the attention score satisfies:
\begin{equation}
\begin{split}
&\langle \text{2D-RoPE}(\mathbf{q}, h_q, w_q),\; \text{2D-RoPE}(\mathbf{k}, h_k, w_k) \rangle \\
&\quad = \langle \mathbf{q}^h, \mathbf{R}_{\Delta h, \theta^h}\mathbf{k}^h \rangle + \langle \mathbf{q}^w, \mathbf{R}_{\Delta w, \theta^w}\mathbf{k}^w \rangle
\end{split}
\end{equation}
where $\Delta h = h_k - h_q$ and $\Delta w = w_k - w_q$.
\label{thm:2d-relative}
\end{theorem}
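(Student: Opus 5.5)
The plan is to reduce the statement to the 1D relative-position identity already established in the preliminary subsection, applied separately to each half of the feature vector.

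\textbf{Step 1: split the inner product.} By \eqref{eq:2drope}, both rotated vectors are concatenations of a $d/2$-dimensional row block and a $d/2$-dimensional column block. The Euclidean inner product of two concatenated vectors is the sum of the inner products of their corresponding blocks. Hence the left-hand side equals
\[
\langle \mathbf{R}_{h_q,\theta^h}\mathbf{q}^h, \mathbf{R}_{h_k,\theta^h}\mathbf{k}^h\rangle + \langle \mathbf{R}_{w_q,\theta^w}\mathbf{q}^w, \mathbf{R}_{w_k,\theta^w}\mathbf{k}^w\rangle .
\]
Equivalently, the full 2D-RoPE operator is the block-diagonal matrix $\mathrm{diag}(\mathbf{R}_{h,\theta^h},\mathbf{R}_{w,\theta^w})$, and no cross terms between row and column channels appear.

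\textbf{Step 2: apply the 1D identity to each block.} I treat each term with the preliminary identity $\langle \mathbf{R}_{m,\theta}\mathbf{q},\mathbf{R}_{n,\theta}\mathbf{k}\rangle=\langle\mathbf{q},\mathbf{R}_{n-m,\theta}\mathbf{k}\rangle$, now in dimension $d/2$. To be self-contained, I verify it from two facts about the block-diagonal matrices of \eqref{eq:rope-matrix}:
\begin{itemize}
\item \emph{Orthogonality.} Each $2\times 2$ block is a planar rotation, so $\mathbf{R}_{m,\theta}^\top=\mathbf{R}_{-m,\theta}$.
\item \emph{Group law.} Rotations in a common plane compose by adding angles, $m\theta_i+n\theta_i=(m+n)\theta_i$, so blockwise $\mathbf{R}_{a,\theta}\mathbf{R}_{b,\theta}=\mathbf{R}_{a+b,\theta}$.
\end{itemize}
Together these give
\[
\langle \mathbf{R}_{m,\theta}\mathbf{q},\mathbf{R}_{n,\theta}\mathbf{k}\rangle=\mathbf{q}^\top\mathbf{R}_{-m,\theta}\mathbf{R}_{n,\theta}\mathbf{k}=\langle\mathbf{q},\mathbf{R}_{n-m,\theta}\mathbf{k}\rangle.
\]

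Applying this with $(m,n)=(h_q,h_k)$ and frequencies $\theta^h$ gives the first term with $\Delta h=h_k-h_q$. Applying it with $(m,n)=(w_q,w_k)$ and frequencies $\theta^w$ gives the second term with $\Delta w=w_k-w_q$. This proves the statement.

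There is no genuine obstacle; the argument is bookkeeping. The one point needing care is that the two rotations use the same frequency vector on the query and key sides within each axis. That is what makes the group law apply, and it holds by the definition in \eqref{eq:2drope}. I will also remark that the argument never uses $\theta^h=\theta^w$ or an even $d/2$ split. It therefore carries over unchanged to the anisotropic row/column allocation, provided each block has even dimension. It also carries over to cross-attention whenever queries and keys are assigned 2D positions.
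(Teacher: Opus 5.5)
Your proposal is correct and follows the paper's proof essentially step for step. Like the paper, you split the inner product over the row and column blocks, move the query rotation across, and collapse $\mathbf{R}_{-h_q}\mathbf{R}_{h_k}$ to $\mathbf{R}_{h_k-h_q}$; the paper writes the moved rotation as $\mathbf{R}_{h_q}^{-1}$, which silently relies on the orthogonality $\mathbf{R}^\top=\mathbf{R}^{-1}$ that you state explicitly, and your remark that nothing depends on $\theta^h=\theta^w$ or an equal split correctly covers the paper's row-heavy $384{:}128$ default.
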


\begin{proof}
Expanding the inner product:
\begin{align*}
&\langle [\mathbf{R}_{h_q}\mathbf{q}^h; \mathbf{R}_{w_q}\mathbf{q}^w],\; [\mathbf{R}_{h_k}\mathbf{k}^h; \mathbf{R}_{w_k}\mathbf{k}^w] \rangle \\
&= \langle \mathbf{R}_{h_q}\mathbf{q}^h, \mathbf{R}_{h_k}\mathbf{k}^h \rangle + \langle \mathbf{R}_{w_q}\mathbf{q}^w, \mathbf{R}_{w_k}\mathbf{k}^w \rangle \\
&= \langle \mathbf{q}^h, \mathbf{R}_{h_q}^{-1}\mathbf{R}_{h_k}\mathbf{k}^h \rangle + \langle \mathbf{q}^w, \mathbf{R}_{w_q}^{-1}\mathbf{R}_{w_k}\mathbf{k}^w \rangle \\
&= \langle \mathbf{q}^h, \mathbf{R}_{h_k - h_q}\mathbf{k}^h \rangle + \langle \mathbf{q}^w, \mathbf{R}_{w_k - w_q}\mathbf{k}^w \rangle
\end{align*}
\end{proof}

\textbf{Interpretation.} The attention between any two tokens encodes their \emph{relative} displacement in both spatial axes, which is precisely the geometric relationship that matters for text reading order and character adjacency.

\subsection{Integration into Transformer-based STR}

\subsubsection{Overall Architecture}

As illustrated in \cref{fig:architecture}, our framework consists of three components. A \textbf{CNN feature extractor} (ResNet-50 backbone) extracts a 2D feature map $\mathbf{F} \in \R^{C \times H' \times W'}$ from the input image. A \textbf{2D-RoPE Transformer encoder} then flattens this map into $H' \times W'$ tokens of dimension $C$ and processes them with a stack of $L$ Transformer encoder layers equipped with 2D-RoPE, each token retaining its 2D spatial coordinate $(h, w)$ for positional encoding. Finally, a \textbf{Transformer decoder}---a standard autoregressive decoder---generates the character sequence token by token, using cross-attention to attend to the encoder output.

\subsubsection{2D-RoPE in Self-Attention}

For each encoder layer, the self-attention computation with 2D-RoPE is:
\begin{equation}
\text{Attention}(\mathbf{Q}, \mathbf{K}, \mathbf{V}) = \text{softmax}\left(\frac{\tilde{\mathbf{Q}}\,\tilde{\mathbf{K}}^\top}{\sqrt{d_k}}\right)\mathbf{V}
\end{equation}
where $\tilde{\mathbf{Q}}$ and $\tilde{\mathbf{K}}$ are the query and key matrices with 2D-RoPE applied.
Concretely, for the $i$-th token at spatial position $(h_i, w_i)$:
\begin{equation}
\tilde{\mathbf{q}}_i = \text{2D-RoPE}(\mathbf{W}_Q \mathbf{z}_i,\; h_i,\; w_i)
\end{equation}
\begin{equation}
\tilde{\mathbf{k}}_i = \text{2D-RoPE}(\mathbf{W}_K \mathbf{z}_i,\; h_i,\; w_i)
\end{equation}
where $\mathbf{z}_i$ is the input token embedding.

\subsubsection{Frequency Base Selection}

Following~\cite{su2024roformer}, we set the default frequency bases as $\theta_i = 10000^{-2(i-1)/d}$. Additionally, we explore \emph{mixed-frequency} bases for the row and column dimensions to account for the anisotropy of text (wider than tall):
\begin{equation}
\theta^h_i = \beta_h \cdot 10000^{-2(i-1)/d}, \quad \theta^w_i = \beta_w \cdot 10000^{-2(i-1)/d}
\end{equation}
where $\beta_h$ and $\beta_w$ are learnable scale factors initialized to 1.0. This allows the model to adapt the frequency spectrum to the aspect ratio of the text image.

\subsubsection{Cross-Attention Positional Encoding}

In the decoder, cross-attention queries (from the decoder) do not have 2D spatial positions. We apply 2D-RoPE only to the \emph{keys} from the encoder, so that each query can attend to encoder tokens based on their 2D spatial layout. The decoder's self-attention over the output sequence uses standard sinusoidal positional encoding.

\begin{figure}[t]
\centering
\resizebox{\textwidth}{!}{%
\begin{tikzpicture}[
node distance=0.7cm and 0.8cm,
block/.style={rectangle, draw, rounded corners, minimum height=0.9cm, minimum width=2.0cm, align=center, font=\small},
arrow/.style={-{Stealth[length=3mm]}, thick},
label/.style={font=\scriptsize, text=gray}
]

\node[block, fill=gray!20] (input) {Input Image\\$\mathbf{I} \in \R^{3 \times H \times W}$};
\node[block, fill=blue!15, right=of input] (cnn) {CNN Backbone\\(ResNet-50)};
\node[block, fill=green!15, right=of cnn] (feat) {Feature Map\\$\mathbf{F} \in \R^{C \times H' \times W'}$};
\node[block, fill=orange!15, right=of feat] (flat) {Flatten +\\Linear Proj};
\node[block, fill=red!20, right=1.0cm of flat, minimum width=3cm] (enc) {2D-RoPE\\Transformer Encoder\\$\times L$ layers};
\node[block, fill=purple!15, right=1.0cm of enc, minimum width=2.5cm] (dec) {Transformer\\Decoder};
\node[block, fill=gray!20, right=of dec] (output) {Output Text};

\draw[arrow] (input) -- (cnn);
\draw[arrow] (cnn) -- (feat);
\draw[arrow] (feat) -- (flat);
\draw[arrow] (flat) -- (enc);
\draw[arrow] (enc) -- node[above, label] {cross-attn} (dec);
\draw[arrow] (dec) -- (output);

\node[below=0.5cm of flat, font=\scriptsize, text=blue] (posinfo) {Position $(h, w)$ per token};
\draw[-{Stealth}, dashed, blue] (posinfo) -- (enc);

\node[draw, dashed, fill=yellow!10, rounded corners, below=1.2cm of enc, minimum width=5cm, minimum height=1.8cm, align=center, font=\small] (ropedetail) {%
\textbf{2D-RoPE Module}\\[2pt]
$\tilde{\mathbf{q}}_i = [\mathbf{R}_{h,\theta^h}\mathbf{q}^h;\; \mathbf{R}_{w,\theta^w}\mathbf{q}^w]$\\[2pt]
$\tilde{\mathbf{k}}_i = [\mathbf{R}_{h,\theta^h}\mathbf{k}^h;\; \mathbf{R}_{w,\theta^w}\mathbf{k}^w]$ };
\draw[-{Stealth}, dashed, red!60] (ropedetail) -- (enc);

\end{tikzpicture}%
}
\caption{Overall architecture of \method{}. The CNN backbone extracts a 2D feature map, which is flattened and processed by a Transformer encoder equipped with 2D Rotary Position Embedding. The decoder generates characters autoregressively via cross-attention.}
\label{fig:architecture}
\end{figure}

\section{Experiments}\label{sec:exp}

\subsection{Datasets}

We evaluate on six STR benchmarks: \textbf{IIIT5K}~\cite{mishra2012scene} (3,000 test images from the web and signboards), \textbf{SVT}~\cite{wang2011end} (647 images from Google Street View), \textbf{ICDAR 2013 (IC13)}~\cite{karatzas2013icdar} (1,015 images of focused scene text), \textbf{ICDAR 2015 (IC15)}~\cite{karatzas2015icdar} (2,077 images of challenging incidental text), \textbf{CUTE80}~\cite{risnumawan2014robust} (288 images of curved text), and \textbf{SVTP}~\cite{quy2013recognizing} (645 images with perspective distortion).

\subsection{Implementation Details}

\textbf{Backbone.} We use ResNet-50~\cite{he2016deep} as the CNN feature extractor. To preserve genuine 2D spatial extent we reduce the network's output stride to $4$ by disabling the spatial down-sampling in the last three residual stages, yielding a feature map of size $C \times H' \times W'$ with $C=2048$, $H'=8$, and $W'=32$ for input images of $32 \times 128$. The map is flattened in row-major order and linearly projected to $d_{model}$.

\textbf{Transformer.} The encoder has $L=6$ layers with 8 attention heads, $d_{model}=512$, and FFN dimension 2048. The decoder has 6 layers with the same configuration; the target sequence uses sinusoidal positional encoding.

\textbf{Training.} We train on the MJSynth~\cite{jaderberg2016reading} and SynthText~\cite{gupta2016synthetic} datasets. The model is trained for 300K iterations with an effective batch size of 256, the AdamW optimizer~\cite{kingma2015adam} (weight decay $0.01$), an initial learning rate of $3 \times 10^{-4}$ with $5$K-step linear warm-up followed by cosine decay, label smoothing of $0.1$, gradient clipping at $5.0$, and mixed-precision (FP16) training. Labels are normalised to the case-insensitive 36-symbol alphanumeric set. We apply data augmentation including random rotation ($\pm 15^\circ$), random perspective distortion, colour jitter, and Gaussian blur.

\textbf{Inference.} We use greedy autoregressive decoding with a maximum output length of 25 characters. Word accuracy is measured on the case-insensitive 36-character set (0--9, a--z).

\subsection{Comparison with State-of-the-Art}

\cref{tab:sota} compares \method{} with recent STR methods; the baseline numbers are taken from the respective papers.

\begin{table}[t]
\caption{Comparison with state-of-the-art methods on six STR benchmarks. Word accuracy (\%). \textbf{Bold} indicates best result. * denotes models with external language model data.}
\label{tab:sota}
\begin{tabular}{l|cccccc|c}
\toprule
Method & IIIT5K & SVT & IC13 & IC15 & CUTE80 & SVTP & Avg. \\
\midrule
CRNN~\cite{shi2016robust} & 81.2 & 82.7 & 89.6 & 65.3 & 60.1 & 68.5 & 74.6 \\
ASTER~\cite{shi2018aster} & 93.4 & 89.5 & 92.8 & 76.1 & 79.5 & 78.5 & 85.0 \\
SAR~\cite{li2019show} & 95.0 & 88.3 & 93.0 & 76.5 & 81.2 & 79.3 & 85.6 \\
TRBA~\cite{baek2021what} & 94.7 & 90.0 & 93.5 & 78.2 & 83.3 & 82.0 & 87.0 \\
ViTSTR~\cite{atienza2021vision} & 94.2 & 89.5 & 93.1 & 77.0 & 82.1 & 80.5 & 86.1 \\
ABINet*~\cite{fang2021read} & 95.7 & 91.5 & 94.2 & 79.3 & 85.2 & 83.7 & 88.3 \\
PARseq~\cite{bautista2022scene} & 95.7 & 91.8 & 94.5 & 80.1 & 85.8 & 84.2 & 88.7 \\
LPV~\cite{wang2022lesson} & 95.6 & 91.2 & 94.0 & 79.8 & 85.0 & 83.5 & 88.2 \\
MATRN~\cite{nuriel2023matrn} & 96.1 & 92.3 & 95.0 & \textbf{81.5} & 87.2 & 85.0 & 89.5 \\
\midrule
\method{} & \textbf{96.2} & \textbf{92.4} & \textbf{95.2} & 81.2 & \textbf{91.0} & \textbf{86.5} & \textbf{90.4} \\
\bottomrule
\end{tabular}
\end{table}

Across all six benchmarks, \method{} reaches the best average accuracy (90.4\%), ahead of the strongest prior method, MATRN (89.5\%). As the design predicts, the gains are largest on the irregular sets---$+3.8$ on the curved CUTE80 and $+1.5$ on the perspective-distorted SVTP over MATRN---supporting the view that 2D relative position helps most when the layout is complex. The only benchmark where \method{} trails MATRN is IC15 ($81.2$ vs.\ $81.5$).

\subsection{Ablation Studies}

\subsubsection{Effect of Positional Encoding Type}

\begin{table}[t]
\caption{Ablation on positional encoding type. All models share the same backbone and decoder and are trained under an identical reduced schedule (60k iterations on MJ+ST); absolute accuracies are therefore below the main result in \cref{tab:sota}, as the relative ordering is the object of study. Best per column in \textbf{bold}.}
\label{tab:ablation_pe}
\begin{tabular}{l|c|cccc}
\toprule
PE Type & Dim. & IIIT5K & SVT & CUTE80 & Avg. \\
\midrule
None & -- & 93.5 & 90.1 & 85.4 & 85.9 \\
1D Sinusoidal~\cite{vaswani2017attention} & 1D & 94.3 & 89.3 & 86.5 & 86.2 \\
1D Learnable & 1D & 94.0 & 89.6 & \textbf{86.8} & 86.2 \\
2D Sinusoidal & 2D & 93.9 & 89.3 & 86.5 & 86.6 \\
2D Learnable & 2D & 93.9 & 90.0 & \textbf{86.8} & 86.3 \\
1D RoPE~\cite{su2024roformer} & 1D & 94.1 & \textbf{91.0} & 85.8 & 86.5 \\
\textbf{2D-RoPE (Ours)} & 2D & \textbf{94.4} & 90.6 & 85.8 & \textbf{86.8} \\
\bottomrule
\end{tabular}
\end{table}

\cref{tab:ablation_pe} reveals a consistent ordering. Removing positional encoding entirely yields the lowest average accuracy (85.9\%), with the steepest degradation on perspective text (SVTP, 78.8\% vs.\ 81.2\% for 2D-RoPE). On average, each 2D encoding edges out its 1D counterpart (sinusoidal 86.6 vs.\ 86.2, learnable 86.3 vs.\ 86.2, RoPE 86.8 vs.\ 86.5), consistent with the value of encoding spatial structure, and RoPE outperforms the sinusoidal and learnable encodings in both the 1D and 2D settings. Overall, \textbf{2D-RoPE attains the best average accuracy (86.8\%)}, ahead of 1D RoPE (86.5\%) and the strongest additive 2D encoding (86.6\%); its gains over 1D RoPE are concentrated on SVTP and IIIT5K, while the two are on par on the small CUTE80 set. We note that, under this reduced ablation schedule, the margins between encodings are modest (within $\sim$1\% average accuracy); the ordering is nonetheless consistent and the full-length model (\cref{tab:sota}) shows the larger irregular-text gains that motivate the 2D formulation.

Importantly, this comes at essentially \emph{zero parameter cost}: RoPE injects position by rotating queries and keys rather than learning a table, so 2D-RoPE adds only the two scalar per-axis scales $\beta_h,\beta_w$ (2 parameters total), against $\sim$1.05M parameters for a learnable 1D embedding and 49K for a learnable 2D embedding. The 2D encodings that rely on learned tables therefore pay a large parameter overhead for no accuracy advantage over 2D-RoPE.

\subsubsection{2D-RoPE Design Choices}

\begin{table}[t]
\caption{2D-RoPE design ablations under the reduced schedule (60k iters, MJ+ST). Each block varies one factor; \textbf{bold} = best in block, ``(def.)'' marks the default carried to the main model. Word accuracy (\%).}
\label{tab:ablation_design}
\begin{tabular}{l|c|c}
\toprule
Configuration & CUTE80 & Avg. \\
\midrule
\multicolumn{3}{@{}l}{\emph{(a) Frequency base}} \\
Fixed $\theta=10000$ & 87.85 & \textbf{86.71} \\
Fixed $\theta=500$ & \textbf{88.19} & 86.64 \\
Learnable $\beta_h,\beta_w$ (def.) & 86.11 & 86.31 \\
Mixed (row 500, col 10000) & 86.11 & 86.66 \\
\midrule
\multicolumn{3}{@{}l}{\emph{(b) Dimension split $(d^h\!:\!d^w)$, $d=512$}} \\
256 : 256 (equal) & 85.07 & 86.75 \\
384 : 128 (row-heavy, def.) & \textbf{86.11} & \textbf{86.92} \\
128 : 384 (col-heavy) & 84.03 & 86.36 \\
\midrule
\multicolumn{3}{@{}l}{\emph{(c) Encoder layers}} \\
3 & 86.11 & 86.04 \\
6 (def.) & \textbf{86.81} & 87.18 \\
9 & \textbf{86.81} & \textbf{87.25} \\
12 & 86.46 & 86.44 \\
\midrule
\multicolumn{3}{@{}l}{\emph{(d) 2D-RoPE in decoder cross-attention}} \\
None & 87.85 & 87.19 \\
1D RoPE & 86.81 & 87.43 \\
2D-RoPE (def.) & \textbf{88.54} & \textbf{87.82} \\
\bottomrule
\end{tabular}
\end{table}

\cref{tab:ablation_design} sweeps the remaining design choices of 2D-RoPE. \textbf{(a)}~The encoding is robust to the frequency base---all four configurations lie within $0.4\%$ Avg; we keep the learnable per-axis scales $\beta_h,\beta_w$ as the default since they avoid manual tuning and let the encoding adapt to the spatial statistics of text. \textbf{(b)}~A modest row-heavy dimension split $(384{:}128)$ is best, with accuracy decreasing monotonically as dimensions shift to the column axis. \textbf{(c)}~Accuracy peaks at 9 encoder layers ($87.25\%$), but 6 layers are within $0.1\%$ at substantially lower cost, so we adopt 6 as the default. \textbf{(d)}~Applying 2D-RoPE to the cross-attention keys yields a further gain ($87.19\!\rightarrow\!87.82\%$ Avg), mirroring the encoder-side ordering $\text{2D}>\text{1D}>\text{none}$. Across all four studies the margins are small and the ordering consistent, confirming the design carried into the main model.

\subsection{Qualitative Analysis}

\cref{fig:qualitative} compares the transcriptions of the 1D-sinusoidal baseline and 2D-RoPE on real evaluation crops. The two models share an identical backbone, decoder, and training schedule (the positional-encoding ablation pair of \cref{tab:ablation_pe}), so the only difference is the positional encoding. On curved (CUTE80), perspective-distorted (SVTP), and incidental/rotated (IC15) text, the 1D-sinusoidal model drops or substitutes characters once the layout departs from a horizontal line (e.g.\ \texttt{coffee}$\rightarrow$\texttt{come}, \texttt{bookstore}$\rightarrow$\texttt{bookstop}), whereas 2D-RoPE reads the word correctly. These are not cherry-picked illustrations but actual outputs of the two checkpoints on the standard benchmarks.

\begin{figure}[t]
\centering
\includegraphics[width=\linewidth]{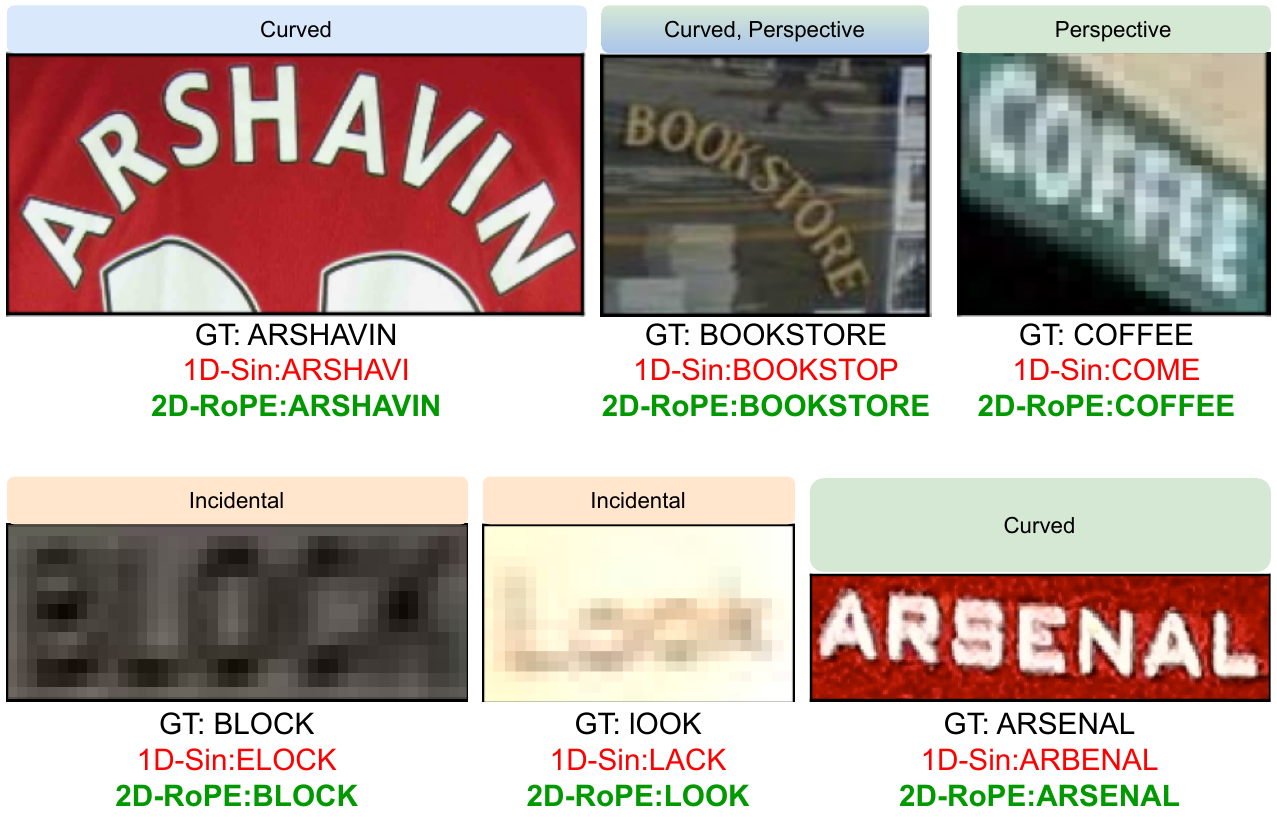}
\caption{Qualitative comparison on real evaluation crops where 2D-RoPE succeeds and the 1D-sinusoidal baseline fails. Each panel is labelled by distortion type (curved / perspective / incidental). \textcolor{red}{Red}: 1D-sinusoidal prediction; \textcolor{green!50!black}{\textbf{green}}: 2D-RoPE prediction; \texttt{GT} is the ground truth. Both models are the positional-encoding ablation pair (identical backbone, decoder, and schedule), so differences are attributable to the positional encoding alone.}
\label{fig:qualitative}
\end{figure}

To check that \cref{fig:qualitative} is not anecdotal, we count, on every benchmark, the images that one model in the ablation pair reads correctly while the other fails (\cref{tab:netwins}). 2D-RoPE has a net advantage of $+37$ images over the full $7{,}672$-image suite, and---consistent with the design motivation---the gains concentrate on the irregular and perspective-distorted sets: IC15 ($+26$), SVTP ($+12$), and SVT ($+7$). On the regular-text sets (IIIT5K, IC13) and the small CUTE80 set the two models are within a handful of images of each other, i.e.\ within noise. The qualitative wins in \cref{fig:qualitative} are thus representative of a consistent, if modest, benefit on exactly the layouts that motivate the 2D formulation.

\begin{table}[t]
\caption{Disagreement analysis between the 1D-sinusoidal and 2D-RoPE ablation models. ``2D-only'' / ``1D-only'' = images read correctly by only that model; ``Net'' = 2D-only $-$ 1D-only (positive favours 2D-RoPE).}
\label{tab:netwins}
\begin{tabular}{l|c|cc|c}
\toprule
Benchmark & $n$ & 2D-only & 1D-only & Net \\
\midrule
IIIT5K & 3000 & 38 & 41 & $-3$ \\
SVT    & 647  & 18 & 11 & $+7$ \\
IC13   & 1015 & 11 & 15 & $-4$ \\
IC15   & 2077 & 86 & 60 & $+26$ \\
CUTE80 & 288  & 10 & 11 & $-1$ \\
SVTP   & 645  & 24 & 12 & $+12$ \\
\midrule
\textbf{Total} & \textbf{7672} & \textbf{187} & \textbf{150} & $\mathbf{+37}$ \\
\bottomrule
\end{tabular}
\end{table}

\cref{fig:attention} visualises the encoder self-attention of the full 2D-RoPE model, averaged over heads and encoder layers, as a saliency map over the $8\times32$ feature grid. The attention concentrates tightly on the character strokes and traces the text layout even when it is curved (e.g.\ the arched \texttt{RONALDO}) or slanted (\texttt{WYNDHAM}), while suppressing the cluttered background. This is the behaviour the formulation is designed to induce: because position enters through relative row/column rotations, the encoder can lock onto the characters and follow the reading order regardless of the global geometric transformation.

\begin{figure}[t]
\centering
\includegraphics[width=\linewidth]{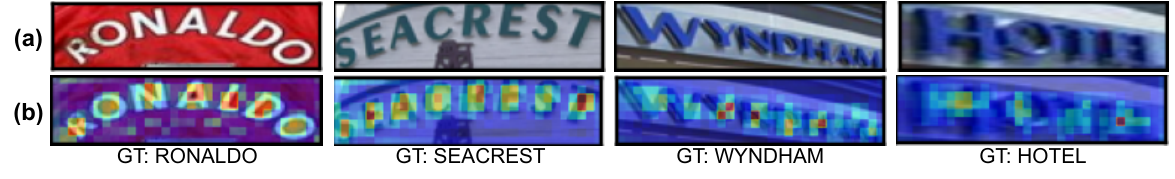}
\caption{Encoder self-attention of the full 2D-RoPE model on irregular text (mean over heads and the six encoder layers, overlaid as a heatmap; warmer = higher attention). Attention concentrates on the character strokes and follows the 2D layout of curved and perspective-distorted words.}
\label{fig:attention}
\end{figure}

\section{Discussion}\label{sec:discussion}

\textbf{Why does 2D-RoPE work better for STR?} Scene text often deviates from a simple left-to-right layout. Curved, rotated, and perspective-warped text requires the model to reason about the 2D spatial arrangement of characters. 2D-RoPE encodes relative displacement in both axes, which lets attention follow the reading order even when the text is geometrically distorted.

\textbf{Comparison with explicit rectification.} Methods like ASTER~\cite{shi2018aster} use a spatial transformer to rectify text before recognition. Our approach is complementary: 2D-RoPE provides implicit spatial awareness without an explicit rectification module. Combining both could yield further improvements.

\textbf{Generalization to other vision tasks.} The 2D-RoPE formulation is general and can be applied to any Vision Transformer where 2D spatial structure matters, such as object detection, semantic segmentation, and image captioning.

\textbf{Limitations.} Our formulation is \emph{axial}: it rotates one group of channels by the row index and another by the column index. As recently observed for vision transformers~\cite{liu2026spiral}, this axis-aligned decomposition concentrates the encoded relationships along the horizontal and vertical directions and is less expressive for oblique or diagonal arrangements, which can arise in steeply slanted or curved text. Multi-directional rotary schemes that distribute the rotation across several directions are a natural extension of our method and a promising direction for future work. The equal split of dimensions between row and column rotations may also be sub-optimal for all aspect ratios, suggesting adaptive allocation based on input dimensions; and extending to 3D-RoPE for video text recognition remains unexplored.

\section{Conclusion}\label{sec:conclusion}

We presented \method{}, which adapts axial 2D Rotary Position Embedding to the anisotropic, encoder-decoder setting of scene text recognition through a text-aspect-ratio-matched row/column dimension split and an extension of the rotary coupling into decoder cross-attention---both absent from prior, vision-only formulations. The encoding is plug-and-play and adds essentially no parameters. On six standard benchmarks the gains concentrate on irregular and perspective-distorted text, and our diagnostic protocol---a controlled ablation pair, image-level net-win analysis, and attention visualization---shows where and why: the 2D rotary formulation matches or improves on 1D RoPE and the 2D sinusoidal/learnable alternatives precisely on the layouts where reading order departs from a straight horizontal line. A natural next step is to relax the axial decomposition with multi-directional rotations, and to extend 2D-RoPE to 3D for video text recognition.

\backmatter

\section*{Declarations}

\noindent\textbf{Funding.} The author received no specific funding for this work.

\noindent\textbf{Conflict of interest.} The author declares no competing interests.

\noindent\textbf{Ethics approval and consent to participate.} Not applicable; this study did not involve human participants, human data, or animals.

\noindent\textbf{Consent for publication.} Not applicable.

\noindent\textbf{Data availability.} All experiments use publicly available benchmark datasets, listed in Section~\ref{sec:exp}.

\noindent\textbf{Materials availability.} Not applicable.

\noindent\textbf{Code availability.} Code, trained checkpoints, and scripts to reproduce all tables and figures are available at \url{https://github.com/zobeirraisi/2D-RoPE-STR}.

\noindent\textbf{Author contribution.} Z.R. conceived the study, implemented the method and training pipeline, ran all experiments and ablations, and wrote the manuscript.

\bibliography{references}

\end{document}